\documentclass{article}

\usepackage{microtype}
\usepackage{graphicx}
\usepackage{subcaption}
\usepackage{booktabs} 
\usepackage{hyperref}
\usepackage{kotex}
\usepackage{comment}
\usepackage{multirow}
\usepackage{xcolor}
\usepackage{colortbl}

\usepackage[accepted]{icml2026}

\usepackage{amsmath}
\usepackage{amssymb}
\usepackage{mathtools}
\usepackage{amsthm}

\usepackage[capitalize,noabbrev]{cleveref}
\usepackage[textsize=tiny]{todonotes}

\theoremstyle{plain}
\newtheorem{theorem}{Theorem}[section]

\newtheorem{lemma}[theorem]{Lemma}

\theoremstyle{definition}

\newtheorem{assumption}[theorem]{Assumption}
\theoremstyle{remark}

\icmltitlerunning{Efficient Epistemic Uncertainty Estimation for Large Language Models via Knowledge Distillation}

\begin{document}

\makeatletter
\newcounter{@affilgsds}\setcounter{@affilgsds}{1} 
\newcounter{@affilae}\setcounter{@affilae}{2}     
\newcounter{@affilrse}\setcounter{@affilrse}{3}   
\newcounter{@affilgist}\setcounter{@affilgist}{4} 
\setcounter{@affiliationcounter}{4}               
\makeatother

\twocolumn[
  \icmltitle{Efficient Epistemic Uncertainty Estimation\\for Large Language Models via Knowledge Distillation}

  \icmlsetsymbol{corr}{$\dagger$}

    \begin{icmlauthorlist}
      \icmlauthor{Seonghyeon Park}{ae}
      \icmlauthor{Jewon Yeom}{gsds}
      \icmlauthor{Jaewon Sok}{rse}
      \icmlauthor{Jeongjae Park}{gsds}
      \icmlauthor{Heejun Kim}{gist}
      \icmlauthor{Taesup Kim}{gsds,corr}
      
    \end{icmlauthorlist}
    
    \icmlaffiliation{gsds}{Graduate School of Data Science, Seoul National University}
    \icmlaffiliation{rse}{Department of Rural Systems Engineering, Seoul National University}
    \icmlaffiliation{ae}{Department of Aerospace Engineering, Seoul National University}
    \icmlaffiliation{gist}{Department of Electrical Engineering and Computer Science, Gwangju Institute of Science and Technology}
    
    \icmlcorrespondingauthor{Taesup Kim}{taesup.kim@snu.ac.kr}

  \icmlkeywords{Uncertainty Estimation, Large Language Models, Knowledge Distillation, Speculative Decoding}

  \vskip 0.3in

]

\printAffiliationsAndNotice{}

\begin{abstract}
Quantifying uncertainty in Large Language Models (LLMs) is essential for mitigating hallucinations and enabling risk-aware deployment in safety-critical tasks. 
However, estimating \emph{Epistemic Uncertainty} (EU) via Deep Ensembles is computationally prohibitive at the scale of modern models. 
We propose a framework that leverages the small draft models to efficiently estimate token-level EU, bypassing the need for full-scale ensembling.
Theoretically grounded in a \textit{Bias-Variance Decomposition}, our approach approximates EU via Jensen-Shannon divergence among drafts (variance proxy) and KL divergence between the draft mixture and the target (bias proxy). 
To further ensure accuracy without significant overhead, we introduce \textit{Online Stochastic Distillation} (OSD) to efficiently approximate target aggregation and the \textit{Data-Diverse Drafts} (DDD) strategy to enhance draft diversity for better target approximation.
Extensive experiments on GSM8K demonstrate that our method reduces the estimation error (RMSE) by up to 37\% compared to baselines.
Crucially, our approach achieves Hallucination Detection performance competitive with heavy perturbation-based methods like TokUR while incurring negligible inference costs, offering a practical solution for uncertainty-aware LLM deployment.
\end{abstract}

\section{Introduction}
\label{sec:intro}
Large Language Models (LLMs) have demonstrated remarkable performance across a wide range of reasoning, generation, and decision-making tasks \cite{bubeck2023sparks, gemini2023family}.
Despite these advances, their tendency to produce fluent yet incorrect responses, which is commonly referred to as \emph{hallucinations}, remains a critical obstacle to reliable real-world deployment \cite{huang2025surveyHallucinationLLMs, alansari2025hallucinationSurvey, ji2023surveyHallucinationNLG}. 
A key challenge underlying hallucinations is the lack of reliable uncertainty estimation, which prevents models from recognizing when they are likely to be wrong.

\begin{figure}[t]
  \centering
  \includegraphics[width=\columnwidth]{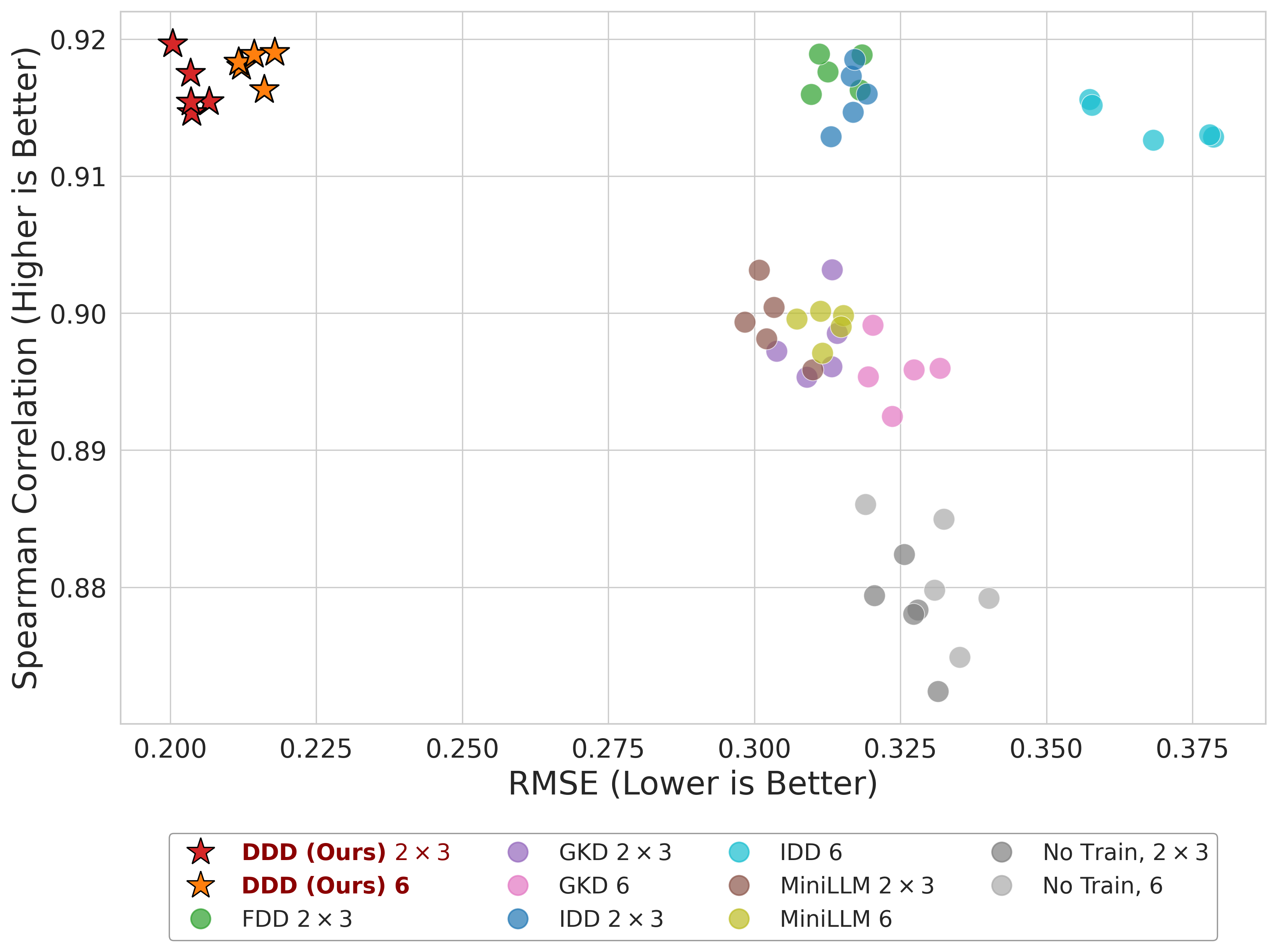}
  \caption{\textbf{RMSE vs. Spearman correlation for 8B (target model) $\rightarrow$ 3B (draft model).} Each point represents one of 5 independent runs. Lower RMSE and higher Spearman indicate better uncertainty estimation. Our proposed DDD achieves the best trade-off among all methods.}
  \label{fig:main_rmse_spearman_scatter}
\end{figure}

Uncertainty estimation in neural networks is commonly framed through a Bayesian lens, decomposing predictive uncertainty into \emph{aleatoric uncertainty}, which arises from inherent noise in the data, and \emph{epistemic uncertainty}, which reflects uncertainty over model parameters \cite{kendall2017uncertainties, hullermeier2021aleatoric}. 
Among these, \emph{epistemic uncertainty} (EU) is particularly valuable for safety-critical applications, as it signals insufficient knowledge and correlates strongly with out-of-distribution (OOD) inputs and hallucination-prone queries~\cite{ovadia2019trustUncertainty, gal2016dropout, farquhar2024semanticEntropyHallucination, zhang2025tokur}.
Unlike aleatoric uncertainty which captures inherent data noise, epistemic uncertainty effectively quantifies the model's `ignorance' stemming from data scarcity. Consequently, high epistemic uncertainty serves as a direct indicator of potential confabulation, making it a vital reliability metric for high-stake domains.
Accurately estimating EU enables models to abstain, or trigger verification mechanisms, thereby significantly improving trustworthiness.

However, estimating epistemic uncertainty for modern LLMs remains challenging. 
Bayesian approaches such as \emph{Deep Ensembles} provide a principled approximation by integrating over multiple independently trained models~\cite{Lakshminarayanan2017}, but their computational and memory costs are prohibitive for models with billions of parameters. 
Alternative perturbation-based methods, such as dropout sampling, input paraphrasing, or token-level resampling, avoid maintaining multiple model checkpoints \cite{gal2016dropout, ovadia2019trustUncertainty,kuhn2023semantic}.
However, in autoregressive generation, these approaches often conflate epistemic and aleatoric uncertainty and require repeated forward passes of the same large model~\cite{ovadia2019trustUncertainty, farquhar2024semanticEntropyHallucination}. 
These limitations significantly restrict their applicability in latency-sensitive or large-scale settings.

At the same time, recent advances in inference acceleration have introduced an overlooked opportunity. 
Efficient inference paradigms~\citep{Leviathan2023} employ one or more lightweight draft models to propose candidate tokens, which are subsequently verified by a larger target model. 
Although originally designed for efficiency, this strategy naturally produces a family of auxiliary models that approximate the target model's predictive behavior while differing in architecture size, training data, and optimization dynamics.

From a Bayesian perspective, epistemic uncertainty arises from variability across plausible models sampled from the posterior over parameters. 
While directly sampling from the posterior of an LLM is infeasible, we recognize that a single lightweight draft model often lacks the expressivity to fully capture the target's complex distribution. 
To address this, we leverage \textit{ensembles of draft models}, where each constituent member acts as a lightweight approximation to a different mode of the target model's posterior predictive distribution, collectively capturing the multi-modal nature of it.
Under this view, disagreement among draft ensembles provides a natural proxy for epistemic uncertainty: when inputs are well-supported by the training distribution, the ensembles tend to agree, whereas for ambiguous or OOD queries, their predictions diverge substantially.

Nevertheless, draft models are imperfect approximations of the target and inevitably introduce bias.
Thus, effective uncertainty estimation must account for both the diversity among drafts and their mismatch with the target model. Motivated by this observation, we ask the following question:
\begin{quote}
\emph{Can we leverage small draft models to estimate the epistemic uncertainty of a large target model both accurately and efficiently?}
\end{quote}

In this work, we answer this question affirmatively and propose a principled framework for epistemic uncertainty estimation using draft models.
Our approach treats a family of drafts as a proxy for the target model’s posterior predictive distribution and decomposes uncertainty estimation error into variance and bias components. 
We derive a bias–variance decomposition showing that epistemic uncertainty can be approximated by (i) the Jensen–Shannon divergence among draft predictions as a variance proxy, and (ii) the KL-divergence between the draft mixture and the target model as a bias proxy.

To further enhance posterior diversity among drafts, we introduce \emph{Online Stochastic Distillation (OSD)} combined with a \emph{Data-Diverse Drafts (DDD)} strategy, where drafts are trained on disjoint data partitions.
Unlike on-policy distillation methods such as GKD~\cite{agarwal2024onpolicy} or MiniLLM~\cite{gu2024minillm}, which tend to over-align drafts with the target and collapse epistemic signals, our approach preserves complementary inductive biases across drafts, leading to more reliable uncertainty estimates.

Empirically, we evaluate our method on GSM8K~\cite{cobbe2021training} and demonstrate a 37.67\% reduction in uncertainty estimation error (RMSE) compared to strong baselines. 
Moreover, our approach achieves hallucination detection performance competitive with heavy perturbation-based methods such as TokUR~\cite{zhang2025tokur}, while incurring negligible additional inference cost.
In summary, our contributions are:
\begin{itemize}     
    \item We propose a novel perspective that leverages small draft models as a proxy for the target model’s posterior predictive distribution.
    \item We derive a bias–variance decomposition for epistemic uncertainty estimation, grounding draft disagreement in a principled theoretical framework.
    \item We introduce Online Stochastic Distillation and Data-Diverse Drafts to preserve posterior diversity among drafts.
    \item We empirically demonstrate that our method enables accurate and efficient epistemic uncertainty estimation for large language models.
\end{itemize}
\section{Related Works}
\label{sec:related}

\paragraph{Uncertainty Estimation in LLMs.}
Uncertainty Estimation for LLMs has been actively studied with the goal of quantifying the reliability of model outputs by estimating the likelihood of prediction errors in advance \cite{gawlikowski2023survey}. By explicitly measuring the uncertainty associated with LLM predictions, UE aims to mitigate hallucinations and to enable safer and more reliable decision-making in downstream applications \cite{xia2025survey}.


To this end, prior work has proposed a variety of methods for assessing output confidence, including approaches that exploit token- and sequence-level likelihood-based signals—such as predicted probabilities, entropy, and calibration metrics \cite{jiang2021calibrationQA, ahdritz2024knowable}—as well as approaches that estimate uncertainty from the variability of predictive distributions obtained via sampling and inference-time ensembling under prompt and input perturbations \cite{lyu2025sampleConsistency, manakul2023selfcheckgpt, tonolini2024bayesianPromptEnsembles}.


In particular, Bayesian Neural Network based approaches \cite{neal2012bayesian} distinguish uncertainty into \textit{aleatoric uncertainty}, arising from the intrinsic ambiguity of the data, and \textit{epistemic uncertainty}, stemming from limitations in the model’s knowledge and reasoning capacity, and attempt to quantify both through Bayesian posterior inference \cite{kendall2017uncertainties, hullermeier2021aleatoric}. To improve computational efficiency, recent work has proposed approximating Bayesian uncertainty by modeling only a subset of parameters probabilistically, for example through low-rank adapters or lightweight Bayesian modules \cite{dialameh2025bayesianMoE, yang2024bayesian, wang2024blob}.


However, much of the existing work has focused on short-form predictions or query-level uncertainty, and token-level uncertainty over long autoregressive generation trajectories remains underexplored \cite{gal2016dropout, malinin2021autoregressiveUE, shi2025trainingFreeBayesianization}. TokUR \cite{zhang2025tokur} takes a step toward addressing this gap by approximating token-level epistemic uncertainty via LoRA-based parameter perturbations, and shows that such uncertainty signals can be used to identify erroneous decoding trajectories during generation \cite{hu2022lora, gao2024spuq}.


Despite their effectiveness, perturbation-based epistemic UE methods require multiple stochastic forward passes of the target LLM to obtain stable uncertainty estimates, leading to a substantial increase in inference cost. This motivates the need for UE methodologies that can approximate epistemic uncertainty under significantly lower computational budgets.

\paragraph{Efficient Inference via Speculative Decoding.}
Speculative decoding \cite{stern2018blockwise, xia2023speculative} accelerates inference for LLMs by using a lightweight \textit{draft model} to propose multiple future tokens in advance, which are then verified by a larger \textit{target model} in a single forward pass. By reducing the number of expensive autoregressive decoding steps executed by the target model, this framework substantially improves decoding throughput without degrading output quality.


The effectiveness of speculative decoding is largely governed by how well the conditional distributions produced by the \textit{draft model} align with those of the \textit{target model}. Building on this principle, subsequent work has extended the framework with a range of drafting and verification strategies that trade off proposal quality against verification cost \cite{xia2024unlockingEfficiencySpecDec, hu2025speculativeSurvey}. 

Recently, several studies have introduced uncertainty-aware control policies for speculative decoding—such as dynamically adapting the verification frequency or draft length based on uncertainty signals—but these methods are primarily aimed at improving inference efficiency or refining verification strategies, rather than at estimating the epistemic uncertainty of the \textit{target model} itself \cite{qin2025dynamicWidthSpeculativeBeam, huang2025specdecpp, agrawal2024adaedl}.


In this work, we reinterpret speculative decoding from a distributional perspective and show that a properly trained family of draft models can be viewed as a finite surrogate for approximating the posterior of the target model. Building on this interpretation, we reframe speculative decoding into a structural framework for estimating the epistemic uncertainty of the target model in an effective and computationally efficient manner.


\paragraph{Knowledge Distillation.} 

Knowledge Distillation (KD) has been widely studied as a core technique for compressing LLMs and accelerating inference \cite{hinton2015distilling, yang2025surveyKDLLMs}. In a distributional sense, KD can be interpreted as a methodology for training a student model to approximate the teacher’s posterior-induced predictive distribution by matching its output distribution to that of the teacher.


However, standard distillation based on the forward Kullback–Leibler (KL) divergence tends to encourage the student to cover all modes of the teacher distribution. For capacity-limited student models, this mode-covering behavior can lead to overly diffuse or distorted predictive distributions \cite{anonymous2025onllmknowledge,agarwal2024onpolicy}.


To address this limitation, recent work has proposed on-policy distillation schemes, in which the student is trained on samples generated by its own policy and receives supervisory signals from the teacher. MiniLLM \cite{gu2024minillm} adopts reverse KL–based training to promote mode-seeking behavior in generative distillation, thereby encouraging the student to concentrate on the dominant modes of the teacher distribution. GKD \cite{agarwal2024onpolicy}  generalizes this paradigm by unifying divergence choices and sampling policies into a flexible framework, showing that the student distribution can be more faithfully aligned with the teacher’s posterior.
These on-policy distillation approaches provide a conceptual foundation for training draft models to better approximate the dominant probability mass structure of a target model, thereby offering theoretical support for draft-based uncertainty estimation.

However, while mode-seeking behaviors (encouraged by Reverse KL) are beneficial for generating high-quality text by avoiding unlikely tokens, they are detrimental for uncertainty estimation. This is because they artificially collapse the distributional variance necessary to represent the full posterior, thereby masking valid epistemic signals and leading to over-confident student models.


\section{Theoretical Analysis}
\label{sec:theory}
In this section, we establish a theoretical framework for estimating the Epistemic Uncertainty (EU) of a target Large Language Model (LLM) using a family of draft models. We formally define the uncertainty, derive a general upper bound utilizing the draft mixture, and propose a computationally efficient estimator based on a bias-variance decomposition.

\subsection{Preliminaries: \emph{Epistemic Uncertainty}}
Let $\theta$ denote the parameters of the target model, which are assumed to follow a posterior distribution $\pi_T(\theta)$ given the training data. The likelihood of generating an output sequence $y$ given parameters $\theta$ is $p_\theta(y)$. The \textit{predictive distribution} of the target model, often referred to as the Bayesian Model Average (BMA), is defined as the expectation over the posterior:
\begin{equation*}
    p_T(y) = \mathbb{E}_{\theta \sim \pi_T(\theta)}[p_\theta(y)]
\end{equation*}
Epistemic Uncertainty (EU) quantifies the reduction in uncertainty about the model parameters given the data. Formally, it is defined as the mutual information between the output $y$ and the parameters $\theta$, denoted as $I(y; \theta)$. This can be expressed as the difference between the total uncertainty (i.e., entropy of the predictive distribution) and the aleatoric uncertainty (i.e., expected entropy of individual model realizations):
\begin{equation*}
    \text{EU} = H(p_T) - \mathbb{E}_{\theta \sim \pi_T(\theta)}[H(p_\theta)]
\end{equation*}
Following \citet{Lakshminarayanan2017}, this mutual information is equivalent to the expected Kullback-Leibler (KL) divergence between individual parameter realizations and the predictive average.
\begin{lemma}
\label{lemma:eu_kl}
Epistemic uncertainty is equivalent to:
\begin{equation*}
    \text{EU} = \mathbb{E}_{\theta \sim \pi_T(\theta)}[KL(p_\theta || p_T)]
\end{equation*}
\end{lemma}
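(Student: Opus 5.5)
The plan is to start from the definition $\text{EU} = H(p_T) - \mathbb{E}_{\theta \sim \pi_T(\theta)}[H(p_\theta)]$ and expand the expected KL divergence directly, using only that $p_T$ is the posterior average of the $p_\theta$. Write $\mathbb{E}_\theta$ for $\mathbb{E}_{\theta \sim \pi_T(\theta)}$. For each fixed $\theta$, the log of the ratio splits as
\begin{equation*}
    KL(p_\theta \| p_T) = \sum_y p_\theta(y)\log p_\theta(y) - \sum_y p_\theta(y)\log p_T(y) = -H(p_\theta) - \sum_y p_\theta(y)\log p_T(y).
\end{equation*}
The second term is the cross-entropy of $p_T$ relative to $p_\theta$.

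Next I take the expectation over $\theta$. The first term becomes $-\mathbb{E}_\theta[H(p_\theta)]$, which is the aleatoric part. For the cross-entropy term, $\log p_T(y)$ does not depend on $\theta$, so exchanging the expectation with the sum over $y$ gives
\begin{equation*}
    \mathbb{E}_\theta\Big[\sum_y p_\theta(y)\log p_T(y)\Big] = \sum_y \mathbb{E}_\theta[p_\theta(y)]\log p_T(y) = \sum_y p_T(y)\log p_T(y) = -H(p_T).
\end{equation*}
The middle equality uses the definition of the Bayesian Model Average. Combining the two pieces gives $\mathbb{E}_\theta[KL(p_\theta\|p_T)] = H(p_T) - \mathbb{E}_\theta[H(p_\theta)] = \text{EU}$. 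Equivalently, this identifies EU with $I(y;\theta)$ written as an expected KL between the conditional $p_\theta$ and the marginal $p_T$.

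The only step that needs care is exchanging $\mathbb{E}_\theta$ with the sum over outputs $y$, along with the finiteness of the quantities involved. In the token-level setting of the paper the vocabulary is finite, so the sum is finite and linearity of expectation applies directly. Whenever $p_\theta(y)>0$ on a set of positive posterior measure, we have $p_T(y)>0$, so the log-ratios are well defined and the convention $0\log 0=0$ handles the remaining terms. For sequence-level outputs $y$ ranging over a countable space, I would justify the exchange by Tonelli's theorem, applied separately to the nonnegative parts after assuming $\mathbb{E}_\theta[H(p_\theta)]<\infty$ and $H(p_T)<\infty$. Under that assumption both sides are finite and the identity holds as stated.
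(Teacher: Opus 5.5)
Your proposal is correct and follows the same route as the paper's proof. You split $\log(p_\theta/p_T)$ into $-H(p_\theta)$ plus a cross-entropy term, exchange $\mathbb{E}_\theta$ with the sum over $y$, and use $\mathbb{E}_\theta[p_\theta(y)] = p_T(y)$ to recover $H(p_T)$; your remarks on finite vocabularies, supports and Tonelli's theorem add rigor the paper omits (and $H(p_T)<\infty$ already implies $\mathbb{E}_\theta[H(p_\theta)]<\infty$, since conditioning cannot increase entropy).
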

Computing this directly is computationally prohibitive for LLMs as it requires sampling multiple high-cost target models to approximate the expectation $\mathbb{E}_\theta$ and computing the ensemble average $p_T$.

\subsection{Bounding EU via Draft Mixture}
To reduce computational costs, we introduce a family of $K$ lightweight \emph{draft models} $\{q_k\}_{k=1}^K$, typically used in speculative decoding. We define their mixture distribution as $q_\text{mix}(y) = \frac{1}{K}\sum_{k=1}^K q_k(y)$.

We first derive a general relationship linking the target's epistemic uncertainty to this draft mixture.
\begin{theorem}[General Upper Bound]
The expected divergence between the target model realizations and the draft mixture provides an upper bound on the epistemic uncertainty:
\begin{equation*}
    \mathbb{E}_{\theta}[KL(p_\theta || q_\text{mix})] = \text{EU} + KL(p_T || q_\text{mix})
\end{equation*}
\end{theorem}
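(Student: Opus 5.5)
The identity is a compensation (or ``parallel axis'') argument for KL divergence. I will insert the target's predictive distribution $p_T$ inside the logarithm and split the resulting expression into two pieces.

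For each realization $\theta$, I write
\begin{equation*}
\log\frac{p_\theta(y)}{q_\text{mix}(y)} = \log\frac{p_\theta(y)}{p_T(y)} + \log\frac{p_T(y)}{q_\text{mix}(y)}.
\end{equation*}
Taking the expectation over $y\sim p_\theta$ gives $KL(p_\theta\|q_\text{mix}) = KL(p_\theta\|p_T) + \sum_y p_\theta(y)\log\frac{p_T(y)}{q_\text{mix}(y)}$.

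Next I take the expectation over $\theta\sim\pi_T$. The first term becomes $\mathbb{E}_\theta[KL(p_\theta\|p_T)]$, which equals $\text{EU}$ by Lemma~\ref{lemma:eu_kl}. In the second term, $\log\frac{p_T(y)}{q_\text{mix}(y)}$ does not depend on $\theta$, so I can swap the sum over $y$ with $\mathbb{E}_\theta$. Since $\mathbb{E}_\theta[p_\theta(y)] = p_T(y)$ by the definition of the BMA, the second term collapses to $\sum_y p_T(y)\log\frac{p_T(y)}{q_\text{mix}(y)} = KL(p_T\|q_\text{mix})$. This gives the stated equality.

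The upper-bound reading then follows immediately from nonnegativity of $KL(p_T\|q_\text{mix})$ (Gibbs' inequality). Hence $\text{EU}\le \mathbb{E}_\theta[KL(p_\theta\|q_\text{mix})]$, with equality iff $q_\text{mix}=p_T$.

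The only delicate step is justifying the interchange of $\mathbb{E}_\theta$ and the sum over $y$, together with finiteness of the quantities involved.
\begin{itemize}
\item For token-level (finite vocabulary) distributions, the interchange is trivial.
\item For sequences, I would assume absolute continuity, meaning $q_\text{mix}(y)>0$ wherever $p_T(y)>0$. This also ensures each $p_\theta \ll q_\text{mix}$ for $\pi_T$-a.e. $\theta$.
\item I would also assume the relevant sums are finite, or are nonnegative so that Tonelli's theorem applies after splitting into positive and negative parts.
\item If $KL(p_T\|q_\text{mix})=\infty$, both sides are infinite and the identity holds trivially.
\end{itemize}
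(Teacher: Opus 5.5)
Your proof is correct and matches the paper's argument: insert $p_T$ inside the logarithm, identify the first term as EU via Lemma~\ref{lemma:eu_kl}, and collapse the second term to $KL(p_T \| q_\text{mix})$ using $\mathbb{E}_\theta[p_\theta(y)] = p_T(y)$. Your remarks on nonnegativity (which gives the upper-bound reading) and on the conditions for exchanging $\mathbb{E}_\theta$ with $\sum_y$ go slightly beyond the paper, which takes these points for granted.
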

\begin{proof}
    Expanding the KL divergence term:
    \begin{align*}
        &\mathbb{E}_{\theta}[KL(p_\theta || q_\text{mix})] \\
        &= \mathbb{E}_{\theta}\left[\sum_y p_\theta(y) \log \frac{p_\theta(y)}{q_\text{mix}(y)}\right] \\
        &= \mathbb{E}_{\theta}\left[\sum_y p_\theta(y) \log \left(\frac{p_\theta(y)}{p_T(y)} \cdot \frac{p_T(y)}{q_\text{mix}(y)}\right)\right] \\
        &= \mathbb{E}_{\theta}\left[KL(p_\theta || p_T)\right] + \sum_y \mathbb{E}_{\theta}[p_\theta(y)] \log \frac{p_T(y)}{q_\text{mix}(y)}
    \end{align*}
    The first term is the definition of EU (Lemma \ref{lemma:eu_kl}). Since $\mathbb{E}_{\theta}[p_\theta(y)] = p_T(y)$, the second term simplifies to $KL(p_T || q_\text{mix})$.
\end{proof}
Since $KL(p_T || q_\text{mix}) \ge 0$, this confirms that calculating divergences against the draft mixture overestimates the true uncertainty unless the draft mixture perfectly matches the target average.

\subsection{Efficient Estimation via Bias-Variance Decomposition}
To avoid sampling $\theta$ entirely, we leverage the draft models as a proxy for the target posterior. We formally state the \emph{Proxy Posterior Assumption}:
\begin{assumption}

\label{ass:assumption1}
The expectation over the target posterior can be approximated by the expectation over the discrete set of draft models:
\begin{equation*}
    \mathbb{E}_{\theta \sim \pi_T(\theta)}[\cdot] \approx \mathbb{E}_{k \sim U(1, K)}[\cdot]
\end{equation*}
\end{assumption}

Under this assumption, we propose the proxy estimator $\hat{\text{EU}} = \mathbb{E}_k[KL(q_k || p_T)]$. We now show that this estimator can be decomposed into two computationally distinct terms.

\begin{theorem}[Bias-Variance Decomposition]
The proxy estimator decomposes exactly into the Jensen-Shannon Divergence (JSD) among the draft models (Variance) and the divergence of the draft mixture from the target (Bias):
\begin{equation}
\label{eq:bias_variance}
\resizebox{0.9\linewidth}{!}{$
    \mathbb{E}_k[KL(q_k || p_T)] = \underbrace{\text{JSD}(q_1, \dots, q_K)}_{\text{Variance Proxy}} + \underbrace{KL(q_\text{mix} || p_T)}_{\text{Bias Proxy}}
$}
\end{equation}
where $\text{JSD}(q_k) = \mathbb{E}_k[KL(q_k || q_\text{mix})]$.
\end{theorem}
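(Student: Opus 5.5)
The plan is to reuse the telescoping-logarithm argument from the proof of the General Upper Bound, with the roles of the two arguments of the KL divergence swapped. There, the posterior realizations $p_\theta$ sat in the first argument and were averaged around $p_T$. Here the drafts $q_k$ sit in the first argument, and the natural pivot is their uniform average $q_\text{mix}$. So for each $k$ we insert the factor $q_\text{mix}(y)/q_\text{mix}(y)$ inside the logarithm:
\begin{equation*}
KL(q_k \| p_T) = \sum_y q_k(y)\log\frac{q_k(y)}{q_\text{mix}(y)} + \sum_y q_k(y)\log\frac{q_\text{mix}(y)}{p_T(y)}.
\end{equation*}
The first sum is exactly $KL(q_k \| q_\text{mix})$.

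Next we take the uniform expectation $\mathbb{E}_{k\sim U(1,K)}$ of both sides. Only finite sums are involved, so linearity lets us exchange $\mathbb{E}_k$ with $\sum_y$. The first term becomes $\mathbb{E}_k[KL(q_k\|q_\text{mix})]$, which is the generalized Jensen--Shannon divergence $\text{JSD}(q_1,\dots,q_K)$ as defined in the statement. In the second term the factor $\log(q_\text{mix}(y)/p_T(y))$ does not depend on $k$, so the expectation acts only on $q_k(y)$. By the definition of the mixture, $\mathbb{E}_k[q_k(y)] = \frac{1}{K}\sum_k q_k(y) = q_\text{mix}(y)$. The second term therefore collapses to $\sum_y q_\text{mix}(y)\log(q_\text{mix}(y)/p_T(y)) = KL(q_\text{mix}\|p_T)$, which gives \eqref{eq:bias_variance}.

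The identity is exact, so no approximation step is needed and Assumption~\ref{ass:assumption1} serves only to motivate the estimator. The one point needing care is well-definedness. We need $p_T(y)>0$ wherever some $q_k(y)>0$; otherwise both sides are $+\infty$ and the identity still holds in the extended sense. Whenever some $q_k(y)>0$ we also have $q_\text{mix}(y)\ge q_k(y)/K>0$, so the inserted ratio is always legitimate. I would state this support condition briefly and otherwise treat the calculation as routine.

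Finally, I would note that both terms are nonnegative, so each of the variance proxy and the bias proxy lower-bounds the estimator.
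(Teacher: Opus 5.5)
Your proposal is correct and matches the paper's proof: insert $q_\text{mix}$ into the logarithm, recognize the first term as the JSD, and use $\mathbb{E}_k[q_k]=q_\text{mix}$ to collapse the second term into $KL(q_\text{mix}\|p_T)$. Your added remarks also hold and are not in the paper: the support condition (using $q_\text{mix}(y)\ge q_k(y)/K$), the extended-value case, and the observation that the identity is exact and does not rely on the Proxy Posterior Assumption.
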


\begin{proof}
    Substituting $q_\text{mix}$ into the log term of the estimator:
    \begin{align*}
        &\mathbb{E}_k\left[KL(q_k || p_T)\right] \\
        &= \mathbb{E}_k\left[\sum_y q_k \log \frac{q_k}{p_T}\right] \\
        &= \mathbb{E}_k\left[\sum_y q_k \log \left(\frac{q_k}{q_\text{mix}} \cdot \frac{q_\text{mix}}{p_T}\right)\right] \\
        &= \mathbb{E}_k[KL(q_k || q_\text{mix})] + \sum_y \mathbb{E}_k[q_k] \log \frac{q_\text{mix}}{p_T}
    \end{align*}
    The first term is the definition of JSD. Since $\mathbb{E}_k[q_k] = q_\text{mix}$, the second term becomes $KL(q_\text{mix} || p_T)$.
\end{proof}

\paragraph{Operational Significance.}
The decomposition in Eq. \ref{eq:bias_variance} transforms the computationally intractable EU estimation into two manageable terms:
\begin{enumerate}
    \item \textbf{Variance Proxy (JSD).} This measures the disagreement \textit{among} draft ensembles. While this requires forward passes of the constituent draft models, these computations are inherently part of the draft-assisted inference process and do not incur additional overhead relative to the inference strategy.
    \item \textbf{Bias Proxy (KL).} This measures the deviation of the draft mixture from the target posterior average ($p_T$). Crucially, by substituting the intractable $p_T$ with our distilled proxy $p_\text{mix}$, we can estimate this term with just \emph{a single forward pass of the proxy model}. This effectively replaces the prohibitive requirement of executing the full target model ensemble ($N$ passes) with a single inference step.
\end{enumerate}

\subsection{Learning the Proxy Mean via Online Stochastic Distillation}
\label{sec:osd_theory}

To practically compute the Bias Proxy term derived in Eq. \ref{eq:bias_variance}, we require a tractable approximation of the target's predictive distribution $p_T$. Since $p_T$ is an integration over the intractable posterior $\pi_T(\theta)$, direct computation is impossible. To address this, we introduce \textit{Online Stochastic Distillation} (OSD).

\paragraph{Definition and Objective.}
Standard Knowledge Distillation (KD) aligns a student with a fixed teacher. In contrast, OSD trains a single proxy student $p_\text{mix}$ (parameterized by $\phi$) to approximate the \textit{expected} output of a stochastic teacher distribution. We minimize the expected forward KL divergence over the target posterior $\pi_T(\theta)$:
\begin{equation*}
    \mathcal{L}_{\text{OSD}}(\phi) = \mathbb{E}_{x \sim \mathcal{D}} \mathbb{E}_{\theta \sim \pi_T(\theta)} \left[ KL(p_\theta(\cdot|x) || p_\text{mix}(\cdot|x; \phi)) \right]
\end{equation*}
In practice, at each training step, we sample a perturbed realization $\theta \sim \pi_T(\theta)$ (via low-rank noise injection) and update $p_\text{mix}$ to match this stochastic target.

\paragraph{Theoretical Justification.}
This objective is theoretically grounded in information geometry. It is a known property of the forward Kullback-Leibler divergence that the distribution $q^*$ minimizing the expected divergence from a set of distributions $\{p_\theta\}$ is their arithmetic mean (mixture):
\begin{equation*}
    \underset{q}{\arg\min} \ \mathbb{E}_{\theta}[KL(p_\theta || q)] = \mathbb{E}_{\theta}[p_\theta] = p_T
\end{equation*}
Thus, by training on stochastic teacher samples via Forward KL, $p_\text{mix}$ naturally converges to the Bayesian Model Average $p_T$, effectively marginalizing out the epistemic uncertainty of the teacher.
\section{Experimental Setup}
\label{sec:setup}
We designed a comprehensive suite of experiments to validate our theoretical framework and determine the optimal configuration for efficient EU estimation and its application to downstream tasks.

\subsection{Models and Posterior Simulation}
We utilized the Llama3 family of models \cite{touvron2023llama} to cover various scale ratios. Specifically, we employed Llama-3.1-8B-Instruct as the target model (i.e., Teacher) and Llama-3.2-3B-Instruct as the draft model (i.e., Student). 

To create a ground-truth \textit{Target Family} representing the posterior $\pi_T(\theta)$, we injected Low-rank Gaussian Noise into the parameters of the 8B target model. This process generates a set of perturbed models that act as samples from the posterior. We generated target families of size 3 to evaluate estimation performance under different posterior densities. 

\subsection{Datasets and Data Generation}
Experiments were conducted on the {GSM8K} dataset. To train the draft models and the proxy $p_{mix}$, we generated a synthetic training set where the Target model produced 4 diverse responses for each query in GSM8K. This multi-output data captures the aleatoric and epistemic uncertainty of the teacher, providing the necessary signals for distillation.

\subsection{Distillation Methods}
\label{sec:distillation_methods}
We employ \textit{Online Stochastic Distillation} (OSD) as our core training framework, which minimizes the expected Forward-KL divergence between the student and a stochastic teacher distribution. 

\paragraph{Training the Proxy $p_\text{mix}$.} To obtain the proxy model for the Bias term, we apply OSD on the full dataset. By training a single student against stochastic teacher samples, $p_{mix}$ converges to the Bayesian Model Average (mean) of the target family, efficiently approximating $p_T$ in Eq.~\ref{eq:bias_variance}.

\paragraph{Training the Draft Family.}
To evaluate how different sources of stochasticity affect posterior approximation, we categorize the draft training into three distinct variants:
\begin{itemize}
    \item \textbf{Data-Diverse Drafts (DDD):} Each draft model is trained on a \textit{disjoint partition} of the target-generated dataset, inducing diversity through variations in the training data distribution.
    \item \textbf{Initialization-Diverse Drafts (IDD):} Each draft model is initialized with diverse parameter noise and trained on the \textit{entire} dataset, isolating the effect of stochastic starting points on posterior coverage.
    \item \textbf{Fully-Diverse Drafts (FDD):} FDD combines both strategies by applying diverse parameter initialization and training each model on disjoint data partitions.
\end{itemize}

\paragraph{Baselines.} We compare OSD against \textit{GKD \cite{agarwal2024onpolicy}} and \textit{MiniLLM \cite{gu2024minillm}}. While these methods are effective for text quality, their \textit{mode-seeking} nature (Reverse-KL) often collapses the distribution, potentially hindering the capture of full epistemic uncertainty compared to the \textit{mode-covering} nature of OSD.

\subsection{Ensemble Configuration}
\label{sec:architectures}
To investigate the interplay between distillation strategies and ensemble construction, we focus on a structured 2$\times$3 configuration designed to capture the diversity of the target posterior. In this setup, we train six independent draft models and ensemble them in pairs to form three \textit{imitated groups}. Each group is specifically optimized to collectively mimic the predictive behavior of a corresponding member within the target family. This hierarchical approach allows the ensemble to effectively preserve the epistemic uncertainty of the target model by leveraging both the diversity induced during independent training and the collaborative refinement provided by group ensembling.

\subsection{Evaluation Protocols and Metrics}
\paragraph{Uncertainty Estimation.} We measure the fidelity of estimated EU against the ground-truth Target EU using \textit{RMSE} (Lower is better) and \textit{Spearman's $\rho$} (Higher is better). We also report the standard deviation across 5 repeated trials to ensure robustness.

\paragraph{Hallucination Detection.} We utilize token-level uncertainty to identify incorrect generations. Since raw scores are not calibrated probabilities, we train a \textit{logistic regression classifier} on a hold-out set to map uncertainty to correctness probabilities. Performance is measured via \textit{AUROC} (discrimination), \textit{ECE}, and \textit{Brier Score} (calibration). We compare our results against \textit{TokUR \cite{zhang2025tokur}}, a heavy baseline requiring multiple target-level perturbations.
\section{Experimental Results and Analysis}
\label{sec:results}

In this section, we present a detailed analysis of our experimental findings, demonstrating the effectiveness of our framework in approximating epistemic uncertainty and its practical utility in downstream tasks. We first establish the superiority of our Data-Diverse Drafts (DDD) strategy in uncertainty estimation and then evaluate its performance in hallucination detection.

\subsection{Superiority in Uncertainty Estimation}
\label{sec:exp_uncertainty}

We first validate the ability of our draft models to approximate the target model's epistemic uncertainty. This involves assessing the fidelity of the estimated values across various model capacities and distillation strategies.

\begin{table}[t]
\caption{\textbf{Uncertainty Estimation Performance.} Comparison of RMSE ($\downarrow$) and Spearman Correlation ($\uparrow$) across different methods.}
\label{tab:ue_8b_3b}
\centering
\begin{tabular}{l|cc}
\toprule
\textbf{Method} & \textbf{RMSE} $\downarrow$ & \textbf{Spearman} $\uparrow$ \\
\midrule\midrule
Baseline(No Train) & 0.3266 & 0.8781 \\
GKD                & 0.3107 & 0.8981 \\
MiniLLM            & 0.3029 & 0.8994 \\
IDD                & 0.3166 & 0.9159 \\
FDD                & 0.3140 & \textbf{0.9175} \\
\textbf{DDD (Ours)} & \textbf{0.2036} & 0.9165 \\
\bottomrule
\end{tabular}
\end{table}

\begin{table}[t]
\caption{\textbf{Hallucination Detection: Performance vs. Efficiency Trade-off.} Our lightweight draft-based method matches the heavy TokUR baseline while requiring significantly fewer FLOPs. \colorbox{blue!8}{Shaded rows} indicate our method. $\dagger$: Uses 8B target model. $\ddagger$: Uses lightweight drafts.}
\label{tab:hallucination_v2}
\centering
\resizebox{\columnwidth}{!}{
\begin{tabular}{l|c|ccc}
\toprule
\textbf{Method} & \textbf{Cost} & \textbf{AUROC}$\uparrow$ & \textbf{ECE}$\downarrow$ & \textbf{Brier}$\downarrow$ \\
\midrule\midrule
TokUR$^\dagger$ & 1.00$\times$ & 0.7823 & 0.0652 & 0.1495 \\
Draft Untrained$^\ddagger$ & 0.75$\times$ & \textbf{0.7853} & 0.0760 & 0.1494 \\
\cellcolor{blue!8}\textbf{DDD (Ours)}$^\ddagger$ & \cellcolor{blue!8}\textbf{0.75$\times$} & \cellcolor{blue!8}0.7839 & \cellcolor{blue!8}\textbf{0.0576} & \cellcolor{blue!8}\textbf{0.1480} \\
\bottomrule
\end{tabular}
}
\end{table}

As summarized in Table \ref{tab:ue_8b_3b}, our \textit{DDD strategy} consistently achieves the lowest RMSE and highest Spearman correlation across all model pairs. DDD reduces RMSE to {0.2036}, significantly outperforming advanced distillation methods such as GKD (0.3107) and MiniLLM (0.3029). This suggests that our strategy effectively forces the draft family to cover the diverse modes of the target posterior, which is essential for accurate uncertainty quantification.

\begin{table*}[t]
\caption{\textbf{Uncertainty Estimation Performance (Smaller Draft Models).} Comparison of RMSE ($\downarrow$) and Spearman Correlation ($\uparrow$) for 1B draft models. Our \textbf{DDD} strategy consistently outperforms baselines in capturing the target's uncertainty across different target-draft scales.}
\label{tab:ue_1b}
\centering
\resizebox{0.65\textwidth}{!}{ 
\begin{tabular}{l|cc|cc}
\toprule
\multicolumn{1}{c}{}& \multicolumn{2}{c}{\textbf{8B (Target) $\rightarrow$ 1B (Draft)}} & \multicolumn{2}{c}{\textbf{3B (Target) $\rightarrow$ 1B (Draft)}} \\
\cmidrule(lr){2-3}\cmidrule(lr){4-5}
\multicolumn{1}{c}{\textbf{Method}} & \textbf{RMSE} $\downarrow$ & \multicolumn{1}{c}{\textbf{Spearman} $\uparrow$} & \textbf{RMSE} $\downarrow$ & \textbf{Spearman} $\uparrow$ \\
\midrule\midrule
Baseline(No Train) & 0.5719 & 0.8189 & 0.4696 & 0.8327 \\
IDD                & 0.5927 & 0.8843 & 0.5875 & 0.8881 \\
FDD                & 0.5638 & 0.8899 & 0.5604 & 0.8901 \\
\textbf{DDD (Ours)} & \textbf{0.4437} & \textbf{0.8951} & \textbf{0.4023} & \textbf{0.9017} \\
\bottomrule
\end{tabular}
}
\end{table*}

\subsection{Application: Hallucination Detection}
\label{sec:exp_hallucination}

We assess the practical utility of our method by using the estimated token-level uncertainty to identify incorrect generations (hallucinations). We compare our lightweight \textbf{Draft Trained} approach against \textbf{TokUR} \cite{zhang2025tokur}, a heavy baseline that requires multiple forward passes of the target model.

Table \ref{tab:hallucination_v2} presents the detection results on the GSM8K dataset. Our method achieves an AUROC of 0.7839, effectively matching the TokUR baseline (0.7823), and even shows an improvement in ECE and Brier Score. While TokUR incurs significant computational latency by perturbing the 8B target, our method utilizes the 3B draft models. This allows for high-quality uncertainty quantification with negligible overhead.

\subsection{Ablation Studies}
\label{sec:ablation}
\paragraph{Scalability Across Model Sizes.}
We further evaluate the robustness of our approach by applying it to draft with significant low capacity. We use Llama-3.2-1B-Instruct as the draft, and test both scenarios where 8B and 3B model serves as the target. As shown in Table~\ref{tab:ue_1b}, despite the 1B draft model having substantially fewer parameters, the DDD strategy maintains a strong Spearman correlation of {0.8951} in the 8B$\rightarrow$1B scenario. While a slight performance dip occurs in the 8B$\rightarrow$1B scenario due to the increased capacity gap of the target and the draft, the consistent superiority of DDD over baselines highlights its robustness.

\paragraph{Role of the Distilled Proxy $p_\text{mix}$.}
We evaluate the impact of the distilled proxy $p_\text{mix}$ on stabilizing the bias term estimation. As shown in Table~\ref{tab:pmix_robustness}, $p_\text{mix}$ demonstrates superior efficiency and robustness compared to the raw target model ($p_\text{orig}$). 
Crucially, our method achieves a lower RMSE of $0.0283$ with only $K=3$ perturbations, significantly outperforming the raw target model (RMSE 0.0391) even when the latter utilizes a much larger ensemble of $K=10$. 
This result highlights the efficiency of our approach: $p_\text{mix}$ effectively marginalizes over the target's aleatoric noise, allowing for high-fidelity uncertainty estimation with minimal inference overhead. 
This stability is further visualized in Figure~\ref{fig:pmix_stability_boxplot_8b}; notably, when matching the perturbation budget ($K=10$), our proxy reduces the RMSE further to $\approx 0.014$, confirming its effectiveness as a reliable auditor.

\begin{table}[t]
\caption{\textbf{Robustness of $p_\text{mix}$ Proxy.} Comparison of RMSE ($\downarrow$) and Spearman correlation ($\uparrow$) between the raw target model and our distilled $p_\text{mix}$ proxy under varying perturbations.}
\label{tab:pmix_robustness}
\centering
\resizebox{\columnwidth}{!}{
\begin{tabular}{l|l|cc}
\toprule
\textbf{Perturbation} & \textbf{Metric} & \textbf{Target (raw)} & \textbf{$p_\text{mix}$ (Ours)} \\
\midrule\midrule
\multirow{2}{*}{\textbf{K=3}} 
& RMSE ($\downarrow$) & 0.0436 & \textbf{0.0283} \\
& Spearman ($\uparrow$) & \textbf{0.9945} & 0.9829 \\
\midrule
\multirow{2}{*}{\textbf{K=10}}
& RMSE ($\downarrow$) & 0.0391 & \textbf{0.0137} \\
& Spearman ($\uparrow$) & 0.9957 & \textbf{0.9969} \\
\bottomrule
\end{tabular}
}
\end{table}

\begin{figure}[t]
  \centering
  \includegraphics[width=\columnwidth]{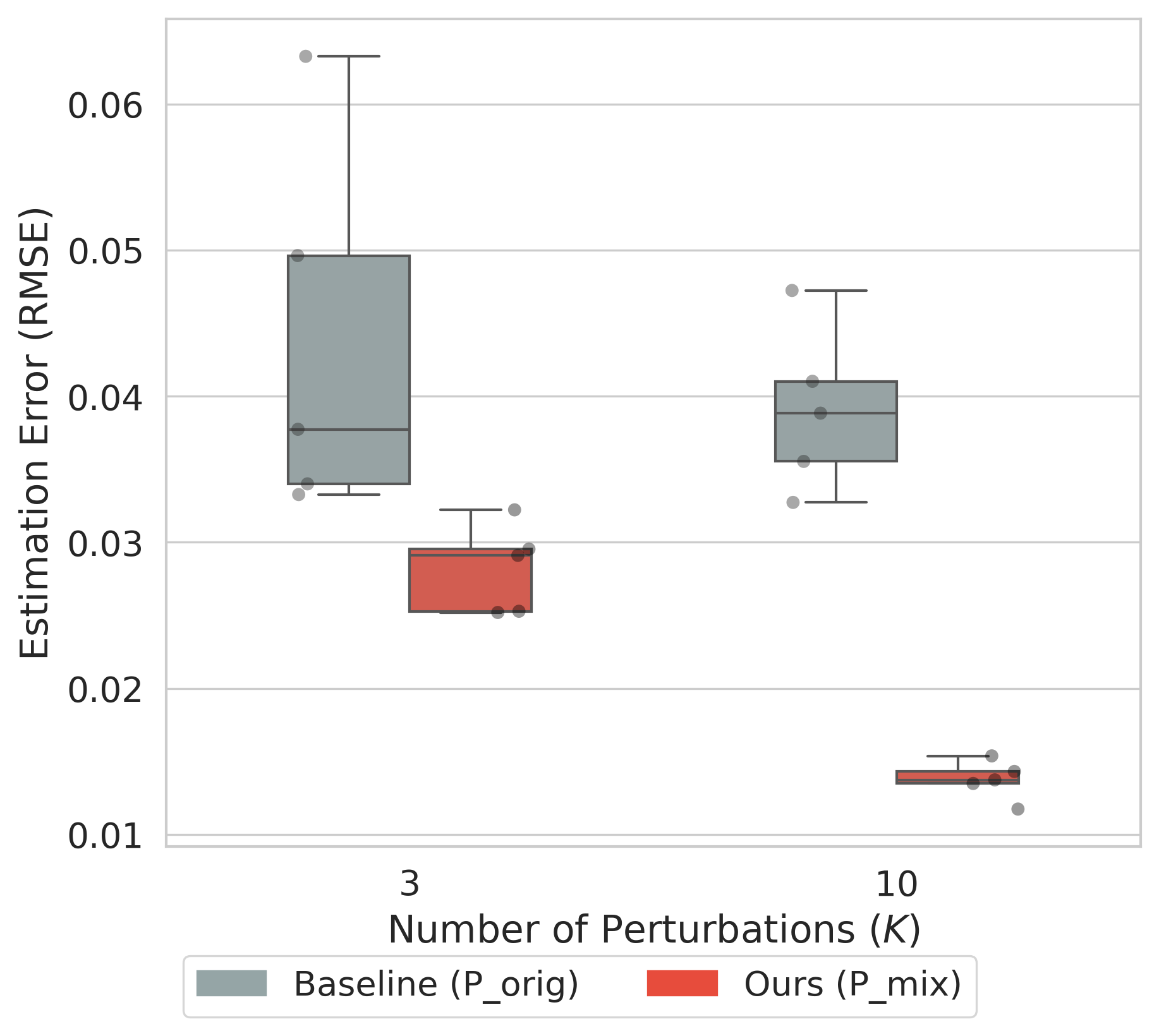}
  \caption{\textbf{Stability analysis comparing $p_{\text{orig}}$ (baseline) and $p_{\text{mix}}$ (Ours).} Box plots show RMSE distribution over 5 runs under $K=3$ and $K=10$ perturbations. Lower RMSE indicates more stable estimation.}
  \label{fig:pmix_stability_boxplot_8b}
\end{figure}

\paragraph{Impact of Data Partitioning and Architecture.}
We analyze the sensitivity of the DDD strategy to data splits. Table~\ref{tab:data_splits} confirms that data partitioning is critical for posterior coverage; the $2\times3$ and $3\times3$ configurations consistently outperform single-chunk training. 
It is worth noting that while the $3\times3$ configuration yields the marginally lowest error (0.2024), the $2\times3$ architecture achieves a highly competitive error (0.2036) with reduced training complexity. 
Therefore, we select $2\times3$ as the optimal trade-off between performance and resource efficiency. For detailed explanation of data partitioning strategies, see Appendix~\ref{app:detail_ensemble}.

\begin{table}[t]
\caption{\textbf{Impact of Data Partitioning.} Evaluation of different split configurations on 8B$\rightarrow$3B RMSE ($\downarrow$). Data-diverse partitioning consistently reduces estimation error.}
\label{tab:data_splits}
\centering
\small 
\setlength{\tabcolsep}{10pt} 
\begin{tabular}{lc}
\toprule
\textbf{Split Config} & \textbf{RMSE} ($\downarrow$) \\
\midrule\midrule
$1\times3$ & 0.2050 \\
\textbf{2$\times$3 (Ours)} & 0.2036 \\
$3\times3$ & \textbf{0.2024} \\
3 only & 0.2122 \\
6 only & 0.2145 \\
9 only & 0.2129 \\
\bottomrule
\end{tabular}
\end{table}

\subsection{Computational Analysis}
\label{sec:compute}

A key advantage of our framework is its computational efficiency. We compare the inference cost of our method against the TokUR baseline, which requires multiple stochastic forward passes of the heavy target model.

\paragraph{FLOPs Analysis.}
We estimate the total inference cost in terms of parameter-equivalent FLOPs. The total cost for our method comprises the drafts' variance estimation plus a single forward pass of the proxy target (required for the bias term and generation). Table~\ref{tab:compute} summarizes the relative overhead. The TokUR baseline requires $K=3$ forward passes of the 8B target model, resulting in a total cost of $3 \times 8\text{B} = 24\text{B}$ per sequence. In comparison, our 8B$\rightarrow$1B configuration (DDD) utilizes 6 lightweight drafts plus one target pass, totaling $(6 \times 1\text{B}) + (1 \times 8\text{B}) = 14\text{B}$. This achieves a 42\% reduction in FLOPs ($0.58\times$) compared to TokUR, while maintaining competitive detection performance. While the 8B$\rightarrow$3B configuration incurs a cost comparable to the baseline ($1.08\times$), it offers higher estimation fidelity for tasks prioritizing precision over raw speed.


\begin{table}[t]
\caption{\textbf{Computational Cost Comparison.} Relative FLOPs for uncertainty estimation. The Total Cost includes 6 draft passes for variance and 1 target pass for bias/generation. Our 1B configuration achieves competitive performance with significantly reduced overhead ($0.58\times$).}
\label{tab:compute}
\centering
\resizebox{\columnwidth}{!}{
\begin{tabular}{l|ccc|c}
\toprule
\textbf{Method} & \textbf{Config} & \textbf{\# Passes} & \textbf{Rel. FLOPs} & \textbf{AUROC} \\
\midrule\midrule
TokUR & 8B (Target) & 3 & 1.00$\times$ & 0.7823 \\
\midrule
\rowcolor{gray!10} DDD (Ours) & 3B $\times$ 6 & 7 & 1.08$\times$ & 0.7839 \\
\rowcolor{gray!10} DDD (Ours) & 1B $\times$ 6 & 7 & \textbf{0.58$\times$} & 0.7781 \\
\bottomrule
\end{tabular}
}
\end{table}
\section{Discussion}
\label{sec:discussion}

\paragraph{Validity of the Bias-Variance Decomposition.}
Empirical results strongly validate the theoretical framework in Section \ref{sec:theory}. High correlation (CCC $> 0.9$) confirms that epistemic uncertainty can be reliably factorized into the internal disagreement among draft models (Variance) and their collective shift from the target (Bias), proving that our proxy-based approach captures the teacher's uncertainty structure.

\paragraph{The Role of Data Diversity.}
We find that random initialization alone fails to induce draft diversity. Without distinct data views, draft models collapse toward the target mean, causing the Variance Proxy ($JSD \approx 0$) to vanish. DDD effectively prevents this by forcing students to learn diverse, valid hypotheses, successfully mapping the teacher's posterior uncertainty onto the student ensemble.

\paragraph{Lightweight Models as Effective Auditors.}
Our method’s competitive performance in hallucination detection suggests a strong distributional alignment between target and draft families. Notably, 1B models can effectively "audit" 8B teachers (Table \ref{tab:hallucination_v2}); while smaller models lack full generative capacity, they reliably preserve relative uncertainty rankings, serving as efficient proxies for identifying target "confusion."

\paragraph{Efficiency and Practical Integration.}
By replacing $N$ target passes with $N$ 3B draft passes and one proxy pass, we significantly reduce costs. Since draft models are typically memory-resident for speculative decoding, this approach enables near-"free" uncertainty estimation with negligible overhead compared to standard Deep Ensembles.
\section{Conclusion}
\label{sec:conclusion}
In this paper, we presented a framework to efficiently estimate the Epistemic Uncertainty of LLMs by leveraging lightweight draft models inherent to efficient inference paradigms. Our experimental results confirm that the \textbf{Data-Diverse Drafts (DDD)} strategy significantly outperforms existing distillation methods in capturing posterior diversity, while the distilled proxy ($p_\text{mix}$) ensures robust estimation by reducing variance by up to 90\%. Furthermore, we demonstrate that this draft-based uncertainty is highly effective for \textbf{Hallucination Detection}, achieving performance comparable to computationally expensive baselines like TokUR with zero marginal inference cost. By bridging the gap between efficient inference and reliable uncertainty quantification, our framework provides a practical and scalable solution for ensuring the safety and reliability of LLMs in real-world deployment.


\section*{Impact Statement}
This paper presents work whose goal is to advance the field of Machine Learning by enabling reliable and efficient uncertainty quantification for Large Language Models. As LLMs are increasingly integrated into safety-critical domains, the ability to detect and mitigate hallucinations is essential for responsible AI deployment. Our framework democratizes access to robust uncertainty signals by leveraging existing inference acceleration infrastructures, thereby eliminating the prohibitive computational costs typically associated with high-quality uncertainty estimation. By providing a scalable solution for real-time hallucination detection with zero marginal inference cost, this work contributes to the development of more trustworthy AI systems and facilitates their safe adoption across diverse industrial and societal applications.

\bibliography{references}

@article{gawlikowski2023survey,
  title={{A Survey of Uncertainty in Deep Neural Networks}},
  author={Gawlikowski, Jakob and Tassi, Cedrique Rovile Njieutcheu and Ali, Mohsin and Lee, Jongseok and Humt, Matthias and Feng, Jianxiang and Kruspe, Anna and Triebel, Rudolph and Jung, Peter and Roscher, Ribana and Shahzad, Muhammad and Yang, Wen and Bamler, Richard and Zhu, Xiao Xiang},
  journal={Artificial Intelligence Review},
  year={2023}
}

@inproceedings{xia2025survey,
  title={{A Survey of Uncertainty Estimation Methods on Large Language Models}},
  author={Xia, Zhiqiu and Xu, Jinxuan and Zhang, Yuqian and Liu, Hang},
  booktitle={ACL Findings},
  year={2025}
}

@inproceedings{ahdritz2024knowable,
  title={{Distinguishing the Knowable from the Unknowable with Language Models}},
  author={Ahdritz, Gustaf and Qin, Tian and Vyas, Nikhil and Barak, Boaz and Edelman, Benjamin L.},
  booktitle={ICML},
  year={2024}
}

@article{jiang2021calibrationQA,
  title={{How Can We Know When Language Models Know? On the Calibration of Language Models for Question Answering}},
  author={Jiang, Zhengbao and Araki, Jun and Ding, Haibo and Neubig, Graham},
  journal={Transactions of the Association for Computational Linguistics},
  year={2021}
}

@inproceedings{lyu2025sampleConsistency,
  title={{Calibrating Large Language Models with Sample Consistency}},
  author={Lyu, Qing and Shridhar, Kumar and Malaviya, Chaitanya and Zhang, Li and Elazar, Yanai and Tandon, Niket and Apidianaki, Marianna and Sachan, Mrinmaya and Callison-Burch, Chris},
  booktitle={AAAI},
  year={2025}
}

@inproceedings{manakul2023selfcheckgpt,
  title={{SelfCheckGPT: Zero-Resource Black-Box Hallucination Detection for Generative Large Language Models}},
  author={Manakul, Potsawee and Liusie, Adian and Gales, Mark},
  booktitle={EMNLP},
  year={2023}
}

@inproceedings{tonolini2024bayesianPromptEnsembles,
  title={{Bayesian Prompt Ensembles: Model Uncertainty Estimation for Black-Box Large Language Models}},
  author={Tonolini, Francesco and Aletras, Nikolaos and Massiah, Jordan and Kazai, Gabriella},
  booktitle={ACL Findings},
  year={2024}
}

@book{neal2012bayesian,
  title={{Bayesian Learning for Neural Networks}},
  author={Neal, Radford M.},
  year={1996},
  publisher={Springer}
}

@inproceedings{kendall2017uncertainties,
  title={{What Uncertainties Do We Need in Bayesian Deep Learning for Computer Vision?}},
  author={Kendall, Alex and Gal, Yarin},
  booktitle={NeurIPS},
  year={2017}
}

@article{hullermeier2021aleatoric,
  title={{Aleatoric and Epistemic Uncertainty in Machine Learning: An Introduction to Concepts and Methods}},
  author={H{\"u}llermeier, Eyke and Waegeman, Willem},
  journal={Machine Learning},
  year={2021}
}

@article{dialameh2025bayesianMoE,
  title={{Bayesian Mixture of Experts for Large Language Models}},
  author={Dialameh, Maryam and Rajabzadeh, Hossein and Zhang, Weiwei and Ahmed, Walid and Kwon, Hyock Ju},
  year={2025},
  journal={arXiv:2511.08968}
}

@inproceedings{yang2024bayesian,
  title={{Bayesian Low-Rank Adaptation for Large Language Models}},
  author={Yang, Adam X. and Robeyns, Maxime and Wang, Xi and Aitchison, Laurence},
  booktitle={ICLR},
  year={2024}
}

@inproceedings{wang2024blob,
  title={{BLoB: Bayesian Low-Rank Adaptation by Backpropagation for Large Language Models}},
  author={Wang, Yibin and Shi, Haizhou and Han, Ligong and Metaxas, Dimitris and Wang, Hao},
  booktitle={NeurIPS},
  year={2024}
}

@article{zhang2025tokur,
  title={{TokUR: Token-Level Uncertainty Estimation for Large Language Model Reasoning}},
  author={Zhang, Tunyu and Shi, Haizhou and Wang, Yibin and Wang, Hengyi and He, Xiaoxiao and Li, Zhuowei and Chen, Haoxian and Han, Ligong and Xu, Kai and Zhang, Huan and Metaxas, Dimitris and Wang, Hao},
  year={2025},
  journal={arXiv:2505.11737}
}

@inproceedings{hu2022lora,
  title={{LoRA: Low-Rank Adaptation of Large Language Models}},
  author={Hu, Edward J. and Shen, Yelong and Wallis, Phillip and Allen-Zhu, Zeyuan and Li, Yuanzhi and Wang, Shean and Wang, Lu and Chen, Weizhu},
  booktitle={ICLR},
  year={2022}
}

@inproceedings{gao2024spuq,
  title={{SPUQ: Perturbation-Based Uncertainty Quantification for Large Language Models}},
  author={Gao, Xiang and Zhang, Jiaxin and Mouatadid, Lalla and Das, Kamalika},
  booktitle={EACL},
  year={2024}
}

@inproceedings{gal2016dropout,
  title={{Dropout as a Bayesian Approximation: Representing Model Uncertainty in Deep Learning}},
  author={Gal, Yarin and Ghahramani, Zoubin},
  booktitle={ICML},
  year={2016}
}

@inproceedings{malinin2021autoregressiveUE,
  title={{Uncertainty Estimation in Autoregressive Structured Prediction}},
  author={Malinin, Andrey and Gales, Mark},
  booktitle={ICLR},
  year={2021}
}

@inproceedings{shi2025trainingFreeBayesianization,
  title={{Training-Free Bayesianization for Low-Rank Adapters of Large Language Models}},
  author={Shi, Haizhou and Wang, Yibin and Han, Ligong and Zhang, Huan and Wang, Hao},
  booktitle={NeurIPS},
  year={2025}
}

@inproceedings{stern2018blockwise,
  title={{Blockwise Parallel Decoding for Deep Autoregressive Models}},
  author={Stern, Mitchell and Shazeer, Noam and Uszkoreit, Jakob},
  booktitle={NeurIPS},
  year={2018}
}

@inproceedings{qin2025dynamicWidthSpeculativeBeam,
  title={{Dynamic-Width Speculative Beam Decoding for LLM Inference}},
  author={Qin, Zongyue and He, Zifan and Prakriya, Neha and Cong, Jason and Sun, Yizhou},
  booktitle={AAAI},
  year={2025}
}

@inproceedings{huang2025specdecpp,
  title={{SpecDec++: Boosting Speculative Decoding via Adaptive Candidate Lengths}},
  author={Huang, Kaixuan and Guo, Xudong and Wang, Mengdi},
  booktitle={COLM},
  year={2025}
}

@inproceedings{agrawal2024adaedl,
  title={{AdaEDL: Early Draft Stopping for Speculative Decoding of Large Language Models via an Entropy-Based Lower Bound on Token Acceptance Probability}},
  author={Agrawal, Sudhanshu and Jeon, Wonseok and Lee, Mingu},
  booktitle={NeurIPS Workshops},
  year={2024}
}

@inproceedings{xia2024unlockingEfficiencySpecDec,
  title={{Unlocking Efficiency in Large Language Model Inference: A Comprehensive Survey of Speculative Decoding}},
  author={Xia, Heming and Yang, Zhe and Dong, Qingxiu and Wang, Peiyi and Li, Yongqi and Ge, Tao and Liu, Tianyu and Li, Wenjie and Sui, Zhifang},
  booktitle={ACL Findings},
  year={2024}
}

@article{hu2025speculativeSurvey,
  title={{Speculative Decoding and Beyond: An In-Depth Survey of Techniques}},
  author={Hu, Yunhai and Liu, Zining and Dong, Zhenyuan and Peng, Tianfan and McDanel, Bradley and Zhang, Sai Qian},
  year={2025},
  journal={arXiv:2502.19732}
}

@article{hinton2015distilling,
  title={{Distilling the Knowledge in a Neural Network}},
  author={Hinton, Geoffrey and Vinyals, Oriol and Dean, Jeff},
  year={2015},
  journal={arXiv:1503.02531}
}

@inproceedings{anonymous2025onllmknowledge,
  title={{On LLM Knowledge Distillation: A Comparison between Forward KL and Reverse KL}},
  author={Cao, Yihan and Kang, Yanbin},
  booktitle={ICLR Blogposts},
  year={2025}
}

@article{yang2025surveyKDLLMs,
  title={{Survey on Knowledge Distillation for Large Language Models: Methods, Evaluation, and Application}},
  author={Yang, Chuanpeng and Zhu, Yao and Lu, Wang and Wang, Yidong and Chen, Qian and Gao, Chenlong and Yan, Bingjie and Chen, Yiqiang},
  journal={ACM Transactions on Intelligent Systems and Technology},
  year={2025}
}

@inproceedings{agarwal2024onpolicy,
  title={{On-Policy Distillation of Language Models: Learning from Self-Generated Mistakes}},
  author={Agarwal, Rishabh and Vieillard, Nino and Zhou, Yongchao and Stanczyk, Piotr and Ramos Garea, Sabela and Geist, Matthieu and Bachem, Olivier},
  booktitle={ICLR},
  year={2024}
}

@inproceedings{gu2024minillm,
  title={{MiniLLM: Knowledge Distillation of Large Language Models}},
  author={Gu, Yuxian and Dong, Li and Wei, Furu and Huang, Minlie},
  booktitle={ICLR},
  year={2024}
}

@inproceedings{Lakshminarayanan2017,
  title={{Simple and Scalable Predictive Uncertainty Estimation Using Deep Ensembles}},
  author={Lakshminarayanan, Balaji and Pritzel, Alexander and Blundell, Charles},
  booktitle={NeurIPS},
  year={2017}
}

@inproceedings{Leviathan2023,
  title={{Fast Inference from Transformers via Speculative Decoding}},
  author={Leviathan, Yaniv and Kalman, Matan and Matias, Yossi},
  booktitle={ICML},
  year={2023}
}

@article{huang2025surveyHallucinationLLMs,
  title={{A Survey on Hallucination in Large Language Models: Principles, Taxonomy, Challenges, and Open Questions}},
  author={Huang, Lei and Yu, Weijiang and Ma, Weitao and Zhong, Weihong and Feng, Zhangyin and Wang, Haotian and Chen, Qianglong and Peng, Weihua and Feng, Xiaocheng and Qin, Bing and Liu, Ting},
  journal={ACM Transactions on Information Systems},
  year={2025}
}

@article{alansari2025hallucinationSurvey,
  title={{Large Language Models Hallucination: A Comprehensive Survey}},
  author={Alansari, Aisha and Luqman, Hamzah},
  year={2025},
  journal={arXiv:2510.06265}
}

@article{ji2023surveyHallucinationNLG,
  title={{Survey of Hallucination in Natural Language Generation}},
  author={Ji, Ziwei and Lee, Nayeon and Frieske, Rita and Yu, Tiezheng and Su, Dan and Xu, Yan and Ishii, Etsuko and Bang, Ye Jin and Madotto, Andrea and Fung, Pascale},
  journal={ACM Computing Surveys},
  year={2023}
}

@inproceedings{ovadia2019trustUncertainty,
  title={{Can You Trust Your Model's Uncertainty? Evaluating Predictive Uncertainty under Dataset Shift}},
  author={Ovadia, Yaniv and Fertig, Emily and Ren, Jie and Nado, Zachary and Sculley, D. and Nowozin, Sebastian and Dillon, Joshua V. and Lakshminarayanan, Balaji and Snoek, Jasper},
  booktitle={NeurIPS},
  year={2019}
}

@article{farquhar2024semanticEntropyHallucination,
  title={{Detecting Hallucinations in Large Language Models Using Semantic Entropy}},
  author={Farquhar, Sebastian and Kossen, Jannik and Kuhn, Lorenz and Gal, Yarin},
  journal={Nature},
  year={2024}
}

@article{gemini2023family,
  title={{Gemini: A Family of Highly Capable Multimodal Models}},
  author={{Gemini Team}},
  year={2023},
  journal={arXiv:2312.11805}
}

@article{bubeck2023sparks,
  title={{Sparks of Artificial General Intelligence: Early Experiments with GPT-4}},
  author={Bubeck, Sébastien and Chandrasekaran, Varun and Eldan, Ronen and Gehrke, Johannes and Horvitz, Eric and Kamar, Ece and Lee, Peter and Lee, Yin Tat and Li, Yuanzhi and Lundberg, Scott and Nori, Harsha and Palangi, Hamid and Ribeiro, Marco Tulio and Zhang, Yi},
  year={2023},
  journal={arXiv:2303.12712}
}

@article{cobbe2021training,
  title={{Training Verifiers to Solve Math Word Problems}},
  author={Cobbe, Karl and Kosaraju, Vineet and Bavarian, Mohammad and Chen, Mark and Jun, Heewoo and Kaiser, Lukasz and Plappert, Matthias and Tworek, Jerry and Hilton, Jacob and Nakano, Reiichiro and Hesse, Christopher and Schulman, John},
  year={2021},
  journal={arXiv:2110.14168}
}

@article{touvron2023llama,
  title={{LLaMA: Open and Efficient Foundation Language Models}},
  author={Touvron, Hugo and Lavril, Thibaut and Izacard, Gautier and Martinet, Xavier and Lachaux, Marie-Anne and Lacroix, Timothée and Rozière, Baptiste and Goyal, Naman and Hambro, Eric and Azhar, Faisal and Rodriguez, Aurelien and Joulin, Armand and Grave, Edouard and Lample, Guillaume},
  year={2023},
  journal={arXiv:2302.13971}
}

@inproceedings{xia2023speculative,
  title={{Speculative Decoding: Exploiting Speculative Execution for Accelerating Seq2seq Generation}},
  author={Xia, Heming and Ge, Tao and Wang, Peiyi and Chen, Si-Qing and Wei, Furu and Sui, Zhifang},
  booktitle={EMNLP Findings},
  year={2023}
}

@inproceedings{kuhn2023semantic,
  title={{Semantic Uncertainty: Linguistic Invariances for Uncertainty Estimation in Natural Language Generation}},
  author={Kuhn, Lorenz and Gal, Yarin and Farquhar, Sebastian},
  booktitle={ICLR},
  year={2023}
}
\bibliographystyle{icml2026}

\newpage
\appendix
\onecolumn
\section{Theoretical Proofs}
\label{app:proofs}
\subsection{Proof of Lemma \ref{lemma:eu_kl}}
\begin{lemma}
\label{lemma:eu_kl}
$\text{EU} = E_\theta[KL(p_\theta || p_T)]$
\end{lemma}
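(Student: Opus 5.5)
The plan is to start from the definition $\text{EU} = H(p_T) - \mathbb{E}_{\theta}[H(p_\theta)]$ and rewrite both entropies as expectations of log-probabilities over outputs $y$. The identity then falls out by linearity of expectation, using the BMA relation $p_T(y) = \mathbb{E}_\theta[p_\theta(y)]$. This is the same mechanism that drives the two decompositions in Section~\ref{sec:theory}.

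First, write the total entropy as
\begin{equation*}
H(p_T) = -\sum_y p_T(y)\log p_T(y).
\end{equation*}
Substituting $p_T(y) = \mathbb{E}_\theta[p_\theta(y)]$ only in the \emph{weight}, not inside the logarithm, gives
\begin{equation*}
H(p_T) = -\mathbb{E}_\theta\Big[\sum_y p_\theta(y)\log p_T(y)\Big],
\end{equation*}
which is the expected cross-entropy of $p_\theta$ relative to $p_T$. Second, write the aleatoric term as
\begin{equation*}
\mathbb{E}_\theta[H(p_\theta)] = -\mathbb{E}_\theta\Big[\sum_y p_\theta(y)\log p_\theta(y)\Big].
\end{equation*}
Subtracting the second expression from the first, and combining under a single $\mathbb{E}_\theta$ and a single $\sum_y$, yields
\begin{equation*}
\mathbb{E}_\theta\Big[\sum_y p_\theta(y)\log \frac{p_\theta(y)}{p_T(y)}\Big] = \mathbb{E}_\theta[KL(p_\theta\|p_T)].
\end{equation*}

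The only real obstacle is justifying the exchange of $\mathbb{E}_\theta$ and $\sum_y$, together with the splitting of the expectation into two separately finite terms. I would handle this by working with a finite output space, such as the token vocabulary at a single decoding step or sequences of bounded length. In that setting both entropies are finite, since each is bounded by the log of the cardinality of the output space, so linearity applies without further conditions.

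Support issues are mild. Whenever $p_\theta(y)>0$ we have $p_T(y)>0$, because $p_T$ averages over $\theta$ with positive posterior weight, so every KL term is finite almost surely. Terms with $p_\theta(y)=0$ contribute zero under the convention $0\log 0 = 0$. For a countably infinite output space, I would assume $H(p_T)<\infty$ and invoke Tonelli's theorem on the nonnegative integrand $-p_\theta(y)\log p_T(y)$ to justify the same interchange.

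Finally, I would note that this equals the mutual information $I(y;\theta)$ under the joint distribution $\pi_T(\theta)p_\theta(y)$. The reason is that $p_T$ is exactly the marginal of $y$ under this joint, so the expected KL is the standard expression for mutual information.
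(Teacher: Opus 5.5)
Your proof is correct and is essentially the paper's argument run in reverse: the paper expands $\mathbb{E}_\theta[KL(p_\theta \| p_T)]$, splits off $-\mathbb{E}_\theta[H(p_\theta)]$, and uses $\mathbb{E}_\theta[p_\theta(y)] = p_T(y)$ to turn the cross-entropy term into $H(p_T)$, which is the same substitution you make in the weight. Your finite-output-space and Tonelli remarks add rigor the paper omits, but the support claim needs a different reason: it should rest on the fact that $p_T(y)=0$ forces $p_\theta(y)=0$ for $\pi_T$-almost every $\theta$, not on each $\theta$ carrying positive posterior weight, which fails for a continuous posterior.
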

\begin{proof}
\begin{align*}
E_\theta[KL(p_\theta || p_T)] &= E_\theta\left[\sum_y p_\theta(y) \log \frac{p_\theta(y)}{p_T(y)}\right] \\
&= E_\theta[-H(p_\theta)] - \sum_y E_\theta[p_\theta(y)] \log p_T(y) \\
&= -E_\theta[H(p_\theta)] + H(p_T) = \text{EU}
\end{align*}
\end{proof}

\subsection{Proof of Theorem (Bias-Variance Decomposition)}
\begin{proof}
We decompose the estimator $E_k[KL(q_k || p_T)]$ by introducing $q_{mix}$:
\begin{align*}
E_k[KL(q_k || p_T)] &= E_k\left[\sum_y q_k(y) \log \left(\frac{q_k(y)}{q_{mix}(y)} \cdot \frac{q_{mix}(y)}{p_T(y)}\right)\right] \\
&= E_k\left[KL(q_k || q_{mix})\right] + \sum_y E_k[q_k(y)] \log \frac{q_{mix}(y)}{p_T(y)} \\
&= \text{JSD}(q_1, \dots, q_K) + KL(q_{mix} || p_T)
\end{align*}
\end{proof}

\clearpage

\section{Additional Experimental Details}
\label{app:exp_details}

\subsection{Training $p_\text{mix}$}
Both $p_\text{mix}$ models were trained under the same configuration described below. The training dataset was prepared as follows. For each noise-injected teacher model used for distillation, we generated five answers per question on the GSM8K dataset. This produced three datasets, which were then mixed in equal proportions (one third each) to form the final training set. During training, for each batch we sampled $N_\text{teacher}$ noise modules to inject into the teacher model, and computed the KL loss between the student model and each sampled teacher. The injected noise was drawn from $\mathcal{N}(0, 0.1)$ same as target perturb .

\begin{table}[h]
\caption{\textbf{Hyperparameter Details for $p_\text{mix}$ training.}}
\label{tab:hyperparameter_pmix}
\centering
\begin{tabular}{lc}
\toprule
\textbf{Hyperparameter} & \textbf{Value} \\
\midrule\midrule
Batch size & 8 \\
Gradient accumulation steps & 8 \\
Epochs & 12 \\
Weight decay & 0.01 \\
Warmup ratio & 0.1 \\
$Lora_\text{r}$ & 16 \\
$Lora_\alpha$ & 32 \\
$Lora_\text{dropout}$ & 0.1 \\
Lr & 3e-4 \\
$N_\text{teacher}$ & 2 \\
Lora modules & $q_\text{proj}$, $v_\text{proj}$ \\
\bottomrule
\end{tabular}
\end{table}

\subsection{Training IDD, DDD, FDD}
Draft models trained with the IDD, DDD, and FDD methods shared the following common training setup. For IDD and FDD, noise modules were sampled from $\mathcal{N}(0, 0.5)$ and injected into the draft models. Under IDD, the ensemble draft models were trained on the same dataset, whereas under DDD and FDD, each draft model was trained on a different dataset. Each dataset was constructed by sampling four answers per question from the GSM8K dataset.

\begin{table}[h]
\caption{\textbf{Hyperparameter Details for IDD, DDD, FDD training.}}
\label{tab:hyperparameter_IDD_DDD_FDD}
\centering
\begin{tabular}{lc}
\toprule
\textbf{Hyperparameter} & \textbf{Value} \\
\midrule\midrule
Batch size & 8 \\
Gradient accumulation steps & 2 \\
Epochs & 1 \\
Weight decay & 0 \\
Warmup steps & 150 \\
$Lora_\text{r}$ & 16 \\
$Lora_\alpha$ & 32 \\
$Lora_\text{dropout}$ & 0.1 \\
Lr & 7e-5 \\
Lora modules & $q_\text{proj}$, $k_\text{proj}$, $v_\text{proj}$, $o_\text{proj}$, $gate_\text{proj}$, $up_\text{proj}$, $down_\text{proj}$ \\
\bottomrule
\end{tabular}
\end{table}

\clearpage

\subsection{Training GKD}
For GKD, the draft model was trained with the Hugging Face GKD Trainer. Unlike IDD, DDD, and FDD—which are trained on datasets generated by the teacher—GKD follows an on-policy setup where the student generates its own training data. To avoid overly perturbing the student’s distribution during early training, we sampled the initial sampling noise from $\mathcal{N}(0, 0.1)$.

\begin{table}[h]
\caption{\textbf{Hyperparameter Details for GKD training.}}
\label{tab:hyperparameter_GKD}
\centering
\begin{tabular}{lc}
\toprule
\textbf{Hyperparameter} & \textbf{Value} \\
\midrule\midrule
Max new tokens & 400 \\
Temperature & 1.0 \\
$Top_\text{p}$ & 1.0 \\
Batch size & 8 \\
Gradient accumulation steps & 2 \\
Epochs & 1 \\
Weight decay & 0 \\
Warmup steps & 150 \\
Lr Scheduler & Cosine \\
$Lora_\text{r}$ & 16 \\
$Lora_\alpha$ & 32 \\
$Lora_\text{dropout}$ & 0.1 \\
Lr & 7e-5 \\
Lora modules & $q_\text{proj}$, $k_\text{proj}$, $v_\text{proj}$, $o_\text{proj}$, $gate_\text{proj}$, $up_\text{proj}$, $down_\text{proj}$ \\
$\lambda$ & 0.5 \\
$\beta$ & 0.5 \\
\bottomrule
\end{tabular}
\end{table}

\subsection{Training MiniLLM}
For MiniLLM, the draft model was trained using the Hugging Face MiniLLM Trainer. As with GKD, MiniLLM is trained in an on-policy manner, where the student generates its own training data. To avoid disrupting the student’s distribution during early training, we sampled the initial sampling noise from $\mathcal{N}(0, 0.1)$.

\begin{table}[h]
\caption{\textbf{Hyperparameter Details for MiniLLM training.}}
\label{tab:hyperparameter_MiniLLM}
\centering
\begin{tabular}{lc}
\toprule
\textbf{Hyperparameter} & \textbf{Value} \\
\midrule\midrule
Conversational prompt & True \\
Max completion length & 400 \\
$Num_\text{generations}$ & 4 \\
Temperature & 1.0 \\
$Top_\text{p}$ & 1.0 \\
Batch size & 8 \\
Gradient accumulation steps & 2 \\
Epochs & 1 \\
Weight decay & 0 \\
Warmup steps & 150 \\
Lr Scheduler & Cosine \\
Single step decomposition & True \\
$Lora_\text{r}$ & 16 \\
$Lora_\alpha$ & 32 \\
$Lora_\text{dropout}$ & 0.1 \\
Lr & 7e-5 \\
Lora modules & $q_\text{proj}$, $k_\text{proj}$, $v_\text{proj}$, $o_\text{proj}$, $gate_\text{proj}$, $up_\text{proj}$, $down_\text{proj}$ \\
\bottomrule
\end{tabular}
\end{table}

\clearpage
\subsection{Details of Hallucination Detection Test}
For the GSM8K test set, we sample one answer per question from the target model and repeat this process three times. During sampling, we instruct the model to place the final answer inside \texttt{\textbackslash boxed\{\}}, and then assign a binary correctness label (0 or 1) by comparing the extracted boxed answer to the GSM8K ground-truth solution. This procedure yields three datasets.

For each dataset, we compute token-level epistemic uncertainty (EU) at every token position in every generated answer sequence, and aggregate these token-level values to obtain a sequence-level EU score. Since EU is not a probability, we pass the sequence-level EU through a logistic function to map it to $[0,1]$. We train this logistic mapping on a training split of the sequence-level EU values. When applied to the test split, the resulting value in $[0,1]$ is interpreted as the probability that the generated answer is incorrect. Finally, we compute AUROC, ECE, and Brier score using these probabilities and the correctness labels of the sampled answers.
 
\subsection{Details of Ensemble Configuration}
\label{app:detail_ensemble}
In this study, we implement Expected Uncertainty (EU) estimation using two distinct approaches based on how the perturbation distribution is constructed.

\paragraph{Approach 1: Ensemble of Trained Draft Models ($S \times M$).}
The first approach constructs the estimator by ensembling explicitly trained draft models. We define a configuration $S \times M$, where $S$ denotes the number of disjoint data partitions (sub-ensembles) and $M$ denotes the number of models trained per partition.
We trained a total pool of nine draft models. Specifically, three draft models form a single ensemble unit (trained on one partition), and we utilize up to three such units. From this pool, we construct the following configurations:
\begin{itemize}
    \item \textbf{1 $\times$ 3}: Selects 3 models from a single partition (Total 3 models).
    \item \textbf{2 $\times$ 3}: Selects 3 models each from 2 disjoint partitions (Total 6 models).
    \item \textbf{3 $\times$ 3}: Selects 3 models each from 3 disjoint partitions (Total 9 models).
\end{itemize}
The final EU is computed based on the disagreement across these aggregated trained models.

\paragraph{Approach 2: Single Draft with Noise Injection ($K$ only).}
The second approach utilizes only a single draft model selected from the trained pool. Instead of combining distinct model weights, we generate "draft perturbations" at inference time by injecting low-rank noise sampled from $\mathcal{N}(0, 0.1)$ into the single draft model. We denote these settings as "$K$ only" (e.g., 3 only, 6 only), where EU is computed directly from the disagreement across $K$ noise-sampled forward passes.

\clearpage

\section{Additional Results}
\label{app:additional_results}

\subsection{Performance Comparison under Fixed Budget ($K=6$)}
We further evaluate the uncertainty estimation performance under a fixed computational budget of $K=6$. As shown in Tables \ref{tab:ue_8b_3b_6} and \ref{tab:ue_1b_6}, DDD (Ours) achieves the best performance, ranking first among all methods in this setting. 
Specifically, in the 8B $\rightarrow$ 3B scenario, DDD records the lowest RMSE of 0.2145, outperforming all baselines.

This superiority is maintained across different model scales. Even with smaller 1B draft models (Table \ref{tab:ue_1b_6}), DDD consistently secures the top spot, significantly surpassing the 6 only approach. These results confirm that under the specific constraint of $K=6$, our proposed DDD strategy is the optimal choice for maximizing estimation fidelity.

\begin{table}[h]
\caption{\textbf{Uncertainty Estimation Performance (8B $\rightarrow$ 3B).} Comparison of RMSE ($\downarrow$) and Spearman Correlation ($\uparrow$) across different methods.}
\label{tab:ue_8b_3b_6}
\centering
\begin{tabular}{l|c|cc}
\toprule
\textbf{Method} & \textbf{Setting} & \textbf{RMSE} $\downarrow$ & \textbf{Spearman} $\uparrow$ \\
\midrule\midrule
Baseline(No Train) & 6 & 0.3315 & 0.8810 \\
GKD & 6 & 0.3245 & 0.8958 \\
MiniLLM & 6 & 0.3121 & 0.8991 \\
IDD & 6 & 0.3680 & 0.9139 \\
\textbf{DDD (Ours)} & \textbf{6} & \textbf{0.2145} & \textbf{0.9181} \\
\bottomrule
\end{tabular}
\end{table}

\begin{table*}[h]
\caption{\textbf{Uncertainty Estimation Performance (Smaller Draft Models).} Comparison of RMSE ($\downarrow$) and Spearman Correlation ($\uparrow$) for 1B draft models. Our \textbf{DDD (6)} strategy consistently outperforms baselines in capturing the target's uncertainty.}
\label{tab:ue_1b_6}
\centering
\resizebox{0.7\textwidth}{!}{
\begin{tabular}{l|c|cc|cc}
\toprule
\multicolumn{1}{c}{}&\multicolumn{1}{c}{}& \multicolumn{2}{c}{\textbf{8B (Target) $\rightarrow$ 1B (Draft)}} & \multicolumn{2}{c}{\textbf{3B (Target) $\rightarrow$ 1B (Draft)}} \\
\cmidrule(lr){3-4}\cmidrule(lr){5-6}
\multicolumn{1}{c}{\textbf{Method}} & \multicolumn{1}{c}{\textbf{Setting}} & \textbf{RMSE} $\downarrow$ & \multicolumn{1}{c}{\textbf{Spearman} $\uparrow$} & \textbf{RMSE} $\downarrow$ & \textbf{Spearman} $\uparrow$ \\
\midrule\midrule
Baseline(No Train) & 6 & 0.5865 & 0.8264 & 0.4733 & 0.8451 \\
IDD & 6 & 0.7705 & 0.8827 & 0.6653 & 0.8941 \\
\textbf{DDD (Ours)} & \textbf{6} & \textbf{0.4696} & \textbf{0.8905} & \textbf{0.4227} & \textbf{0.9024} \\
\bottomrule
\end{tabular}
}
\end{table*}

\clearpage

\subsection{Robustness of Proxy Model on Smaller Target (3B)}
We further validate the robustness of the $p_\text{mix}$ proxy using a smaller target model (Llama-3.2-3B-Instruct). As shown in Table \ref{tab:pmix_robustness_3b}, the 3B $p_\text{mix}$ proxy exhibits significantly lower RMSE and higher Spearman correlation compared to the raw target model under varying perturbations ($K=3, 10$). Figure \ref{fig:pmix_stability_boxplot} illustrates that the variance of the estimation error is markedly reduced for $p_\text{mix}$. This confirms that our distillation process effectively filters aleatoric noise, providing a stable ground truth regardless of the target model's size.

\begin{table}[h]
\caption{\textbf{Robustness of 3B $p_\text{mix}$ Proxy.} Comparison of RMSE ($\downarrow$) and Spearman correlation ($\uparrow$) between the raw target model and our distilled $p_\text{mix}$ proxy under varying perturbations.}
\label{tab:pmix_robustness_3b}
\centering
\resizebox{0.5\textwidth}{!}{
\begin{tabular}{l|l|cc}
\toprule
\textbf{Perturbation} & \textbf{Metric} & \textbf{Target (raw)} & \textbf{$p_\text{mix}$ (Ours)} \\
\midrule\midrule
\multirow{2}{*}{\textbf{K=3}} 
& RMSE ($\downarrow$) & 0.0274 & \textbf{0.0181} \\
& Spearman ($\uparrow$) & \textbf{0.9940} & 0.9846 \\
\midrule
\multirow{2}{*}{\textbf{K=10}}
& RMSE ($\downarrow$) & 0.0194 & \textbf{0.0104} \\
& Spearman ($\uparrow$) & 0.9958 & \textbf{0.9974} \\
\bottomrule
\end{tabular}
}
\end{table}

\begin{figure}[h]
  \centering
  \includegraphics[width=0.5\columnwidth]{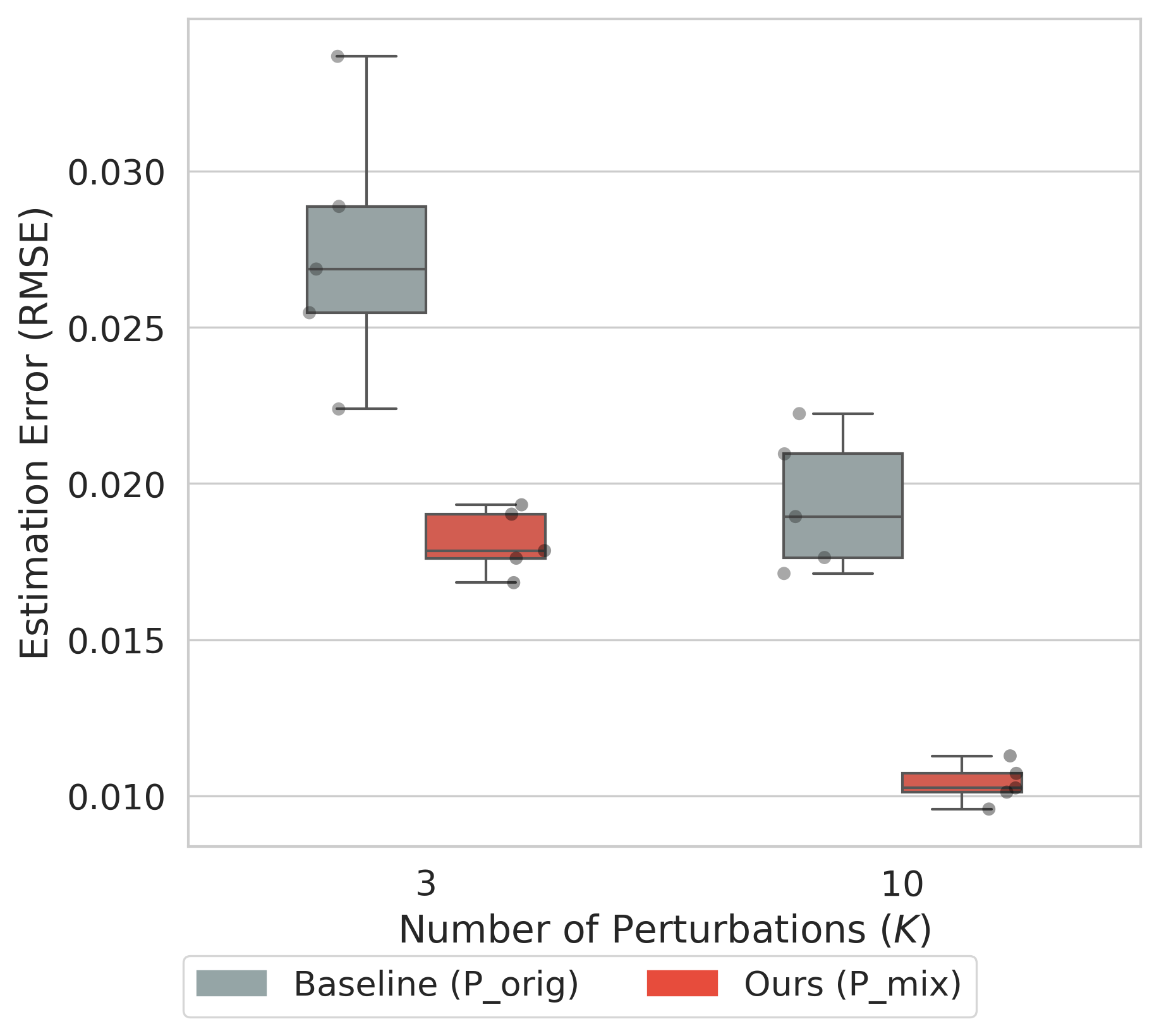}
  \caption{\textbf{Stability analysis for Llama-3.2-3B-Instruct comparing $p_{\text{orig}}$ (baseline) and $p_{\text{mix}}$ (Ours).} Box plots show RMSE distribution over 5 runs under $K=3$ and $K=10$ perturbations. Lower RMSE indicates more stable estimation.}
  \label{fig:pmix_stability_boxplot}
\end{figure}

\clearpage
\subsection{Impact of Data Partitioning on 1B Draft Models}
Finally, we investigate the impact of data partitioning strategies specifically on 1B draft models, extending the analysis from the main text. Tables \ref{tab:data_splits_8b1b} and \ref{tab:data_splits_3b1b} compare disjoint split configurations (e.g., $1\times3$, $2\times3$) and pooled baselines (e.g., "6 only") for the 8B $\rightarrow$ 1B and 3B $\rightarrow$ 1B tasks.

The results consistently show that data-diverse configurations (such as $2\times3$) achieve lower RMSE than training a larger ensemble on a single pooled dataset. This indicates that the benefit of data partitioning is not limited to larger models but is equally critical—if not more so—for lightweight draft models (1B) to prevent mode collapse and improve ensemble calibration.

\begin{table}[h]
\caption{\textbf{Impact of Data Partitioning.} Evaluation of different split configurations on 8B$\rightarrow$1B RMSE ($\downarrow$). Data-diverse partitioning consistently reduces estimation error.}
\label{tab:data_splits_8b1b}
\centering
\begin{tabular}{lc}
\toprule
\textbf{Split Config} & \textbf{RMSE} ($\downarrow$) \\
\midrule\midrule
$1\times3$ & \textbf{0.4388} \\
\textbf{$2\times3$ (Ours)} & 0.4437 \\
$3\times3$ & 0.4478 \\
3 only & 0.4771 \\
6 only & 0.4696 \\
9 only & 0.4674 \\
\bottomrule
\end{tabular}
\end{table}

\begin{table}[h]
\caption{\textbf{Impact of Data Partitioning.} Evaluation of different split configurations on 3B$\rightarrow$1B RMSE ($\downarrow$). Data-diverse partitioning consistently reduces estimation error.}
\label{tab:data_splits_3b1b}
\centering
\begin{tabular}{lc}
\toprule
\textbf{Split Config} & \textbf{RMSE} ($\downarrow$) \\
\midrule\midrule
$1\times3$ & 0.4022 \\
\textbf{$2\times3$ (Ours)} & 0.4023 \\
$3\times3$ & \textbf{0.3974} \\
3 only & 0.4238 \\
6 only & 0.4227 \\
9 only & 0.4205 \\
\bottomrule
\end{tabular}
\end{table}

\end{document}